\newcommand{\namecite}{\cite}
\newcommand{\emcite}{\cite}
\newtheorem{inequality}{Inequality}
\newtheorem{inequalityrestate}{Inequality}
\newtheorem{thm}{Theorem}
\newtheorem{lem}[thm]{Lemma}
\newcommand{\paren}[1]{{\left({#1}\right)}}
\newcommand{\pr}[1]{{{\bf Pr}\left[{#1}\right]}}
\newcommand{\Exp}[1]{{{\bf E}\left[{#1}\right]}}
\newcommand{\Expt}[1]{{{\bf E}_t\left[{#1}\right]}}
\newcommand{\m}{{p_{\min}}}
\newcommand{\sumt}{\sum_{t=1}^T}
\newcommand{\sumi}{\sum_{i=1}^N}
\renewcommand{\eqref}[1]{{Eq.~(\ref{#1})}}
\newcommand{\bxi}{{\boldsymbol \xi}}
\newcommand{\rr}{{\boldsymbol{r}}}
\newcommand{\vh}{{\hat{v}}}
\newcommand{\ucb}{{\hat{U}}}
\newcommand{\veca}{{\bf a}}
\newcommand{\preda}{{\bxi^{\veca}}}
\newcommand{\Alg}{A}
\DeclareMathOperator*{\ExpP}{Exp4.P}
\begin{document}
\twocolumn[
\aistatstitle{Contextual Bandit Algorithms with Supervised Learning Guarantees} 
      
\aistatsauthor{Alina Beygelzimer \And John Langford \And  Lihong Li}
\aistatsaddress{IBM Research\\ Hawthorne, NY\\{beygel@us.ibm.com}
\And Yahoo!\ Research \\ New York, NY\\{jl@yahoo-inc.com}
\And Yahoo!\ Research\\ Santa Clara, CA\\{lihong@yahoo-inc.com}}
 \aistatsauthor{Lev Reyzin
\And Robert E. Schapire}
\aistatsaddress{ Georgia Institute of Technology\\ Atlanta, GA\\ {lreyzin@cc.gatech.edu}
\And Princeton University\\ Princeton, NJ\\ {schapire@cs.princeton.edu}}
\runningauthor{Alina Beygelzimer, John Langford, Lihong Li, Lev Reyzin, Robert E. Schapire}
]

%\maketitle

\begin{abstract}
We address the problem of competing with any large set of $N$ policies 
in the non-stochastic bandit setting, where the learner must repeatedly 
select among $K$ actions but observes only the reward of the chosen action.

We present a modification of the \texttt{Exp4} algorithm of Auer et 
al.~\cite{AuerCFS02}, called \texttt{Exp4.P}, 
which with high probability incurs regret at most 
$O(\sqrt{KT\ln N})$.
Such a bound does not hold for \texttt{Exp4} 
due to the large variance of the importance-weighted estimates 
used in the algorithm.
The new
algorithm is tested empirically in a large-scale, real-world dataset.
For the stochastic version of the problem, we can use \texttt{Exp4.P}
as a subroutine 
to compete with a possibly infinite set of policies of VC-dimension $d$
while incurring regret at most $ O(\sqrt{Td\ln T})$ with high probability.

These guarantees improve on those of all previous algorithms, whether
in a stochastic or adversarial environment, and bring us closer to 
providing guarantees for this setting that are 
comparable to those in standard supervised learning.
\end{abstract}

\section{INTRODUCTION}
A learning algorithm is often faced with the problem of acting given 
feedback only about the actions that it has taken in the past, requiring
the algorithm to explore.
A canonical example is the problem of personalized content recommendation
on web portals,
where the goal is to learn which items are of greatest interest to a user,
given such observable context as the user's search queries or geolocation.

Formally, we consider an online bandit setting where at every step,
the learner observes some contextual information and must choose one 
of $K$ actions, each with a potentially different reward on every round.
After the decision is made, the reward of the chosen action is revealed.
The learner has access to a class of $N$ policies, each of which also maps context 
to actions; the learner's performance is measured in terms of its 
\emph{regret} to this class, defined as the difference between 
the cumulative reward of the best policy in the class and the learner's reward.

This setting goes under different names,
including the ``partial-label problem"~\cite{KakadeST08},
the ``associative bandit problem"~\cite{SMLH06},
the ``contextual bandit problem"~\cite{LangfordZ07} (which is the name 
we use here),
the ``$k$-armed (or multi-armed) bandit
problem with expert advice"~\cite{AuerCFS02}, and ``associative
reinforcement learning''~\cite{Kaelbling94AssociativeFunctions}.
Policies are sometimes referred to as hypotheses or experts, and actions
are referred to as arms.

If the total number of steps $T$ (usually much larger than $K$) is known
in advance, and the contexts and rewards are sampled independently 
from a fixed but unknown joint distribution, 
a simple solution is to first choose actions
uniformly at random for $O(T^{2/3})$ rounds, and from that
point on use the policy that performed best on these rounds.
This approach, a variant of $\epsilon$-greedy (see \emcite{SuttonB98}), 
sometimes called $\epsilon$-first, can be shown to have a regret 
bound of $\textstyle O\left( T^{2/3} (K \ln N)^{1/3} \right)$
with high probability~\cite{LangfordZ07}. 
In the full-label setting, where the entire reward vector is 
revealed to the learner at the end of each step,
the standard machinery of supervised learning 
gives a regret bound 
of $\textstyle O(\sqrt{T \ln N})$
with high probability, using the algorithm that predicts according to the 
policy with the currently lowest empirical error rate.

This paper presents the first algorithm, \texttt{Exp4.P}, 
that {with high probability} achieves $O(\sqrt{T K \ln N})$ regret in 
the {adversarial} contextual bandit setting.  This improves on the 
$O(T^{2/3} (K \ln N)^{1/3})$ high probability bound in the {stochastic} 
setting.
Previously, this result was known to hold {\em in expectation\/} for the
algorithm \texttt{Exp4}~\cite{AuerCFS02}, but a {high probability\/} 
statement did not hold for the same algorithm, as per-round regrets 
on the order of $O(T^{-1/4})$ were possible~\cite{AuerCFS02}. 
Succeeding with high probability
is important because reliably useful methods are preferred in practice.

The \texttt{Exp4.P} analysis addresses competing with a finite (but
possibly exponential in $T$) set of policies. 
In the stochastic case, $\epsilon$-greedy or epoch-greedy style 
algorithms~\cite{LangfordZ07} can compete with an infinite set 
of policies with a finite VC-dimension,
but the worst-case regret grows as $O(T^{2/3})$
rather than $O(T^{1/2})$.  
We show how to use \texttt{Exp4.P} in a black-box fashion 
to guarantee a high probability regret bound 
of $O(\sqrt{Td\ln T})$ in this
case, where $d$ is the VC-dimension.
There are simple examples showing that it is impossible to compete
with a VC-set with an online adaptive adversary, so some stochastic assumption
seems necessary here.

%The technique used is to first sample for a small number of rounds,
%construct a very crude cover, then employ \texttt{Exp4.P} in a
%black-box fashion.  This approach can be easily adapted for use with
%other online exponential-weight algorithms.

This paper advances a basic argument, namely, that
such exploration problems are solvable in almost exactly the
same sense as supervised learning problems, with suitable modifications to
existing learning algorithms.  In particular, 
we show that learning to compete with any set of
strategies in the contextual bandit setting requires only a
factor of $K$ more experience than for supervised learning (to achieve
the same level of accuracy with the same confidence).

\texttt{Exp4.P} does retain one limitation of its predecessors---it
requires keeping 
explicit weights over the experts, so in the case when $N$ is too large, the algorithm
becomes inefficient.  %Abernethy and Rakhlin~\cite{AbernethyR09} 
%ask whether an efficient $\sqrt{T}$-regret 
%algorithm exists for linear multiclass predictors. This question also remains open
%for the arbitrary experts setting if we assume the learner has access to an ERM-type oracle.
On the other hand, \texttt{Exp4.P} provides a practical framework for incorporating
more expressive expert classes, and it is efficient when $N$ is polynomial
in $K$ and $T$.
It may also be possible to run \texttt{Exp4.P} efficiently in certain
cases when working with a family of experts that is exponentially
large, but well structured, as in the case of experts corresponding to
all prunings of a decision tree~\cite{HelmboldSc97}.
A concrete example of this approach is given in Section~\ref{s:experiments}, 
where an efficient implementation of \texttt{Exp4.P} is applied
to a large-scale real-world problem.

\smallskip\noindent
{\bf Related work:}\quad
The non-contextual $K$-armed bandit problem was introduced by
Robbins~\cite{Robbins52Some}, and analyzed by
Lai and Robbins~\cite{LaiR85} in the i.i.d.\ case for fixed reward
distributions.

An adversarial version of the bandit problem was introduced by
Auer~et~al.~\cite{AuerCFS02}.
They gave an exponential-weight algorithm called \texttt{Exp3} with
expected cumulative regret of $\tilde{O}(\sqrt{KT})$ and also \texttt{Exp3.P}
with a similar bound that holds with high probability.
They also showed that these are essentially optimal by proving a
matching lower bound, which holds even in the i.i.d. case.
They were also the first to consider the
$K$-armed bandit problem with expert advice, introducing the
\texttt{Exp4} algorithm as discussed earlier.
Later, McMahan and Streeter~\cite{McMahanS09} 
designed a cleaner algorithm that improves on their bounds 
when many irrelevant actions (that no expert recommends) exist.
Further background on online bandit problems appears in
\cite{Cesa-BianchiL06}.

%For high probability regret bounds in the i.i.d.\ setting, Auer~et~al.~\cite{AuerCF02}
%introduced the use of confidence bounds.
%Langford and
%Zhang~\cite{LangfordZ07} presented an algorithm with
%$O(T^{2/3})$ regret. %, assuming availability of a certain kind of oracle.
%Also, Auer~\cite{Auer02} gave a high-probability algorithm for shifting
%experts.

\texttt{Exp4.P} is based on a careful composition of the \texttt{Exp4}
and \texttt{Exp3.P} algorithms. %, which satisfies the theorem discussed above.  
We distill out the exact exponential moment method bound used
in these results, proving an inequality for martingales
(Theorem~\ref{thm:0}) to derive a sharper bound more directly.  
Our bound is a Freedman-style inequality for martingales~\cite{Freedman75}, and
a similar approach was taken in Lemma 2 of
Bartlett~et~al.~\cite{BDHKRT08}. Our
bound, however, is more elemental than
Bartlett~et~al.'s since our Theorem can be used to
prove (and even tighten) their Lemma, but not vice versa.

With respect to competing with a VC-set, 
a claim similar to our Theorem~\ref{thm:VC} (Section~\ref{sec:VCVE}) 
appears in a work of Lazaric and Munos~\cite{LazaricM09}.
Although they incorrectly claimed that \texttt{Exp4} 
can be analyzed to give a regret bound of $\tilde{O}(KT\ln{N})$ 
with high probability, one can use \texttt{Exp4.P} in their proof instead.
Besides being correct, our analysis is tighter, which is important 
in many situations where such a risk-sensitive algorithm might be applied.

Related to the bounded VC-dimension setting, Kakade and Kalai~\cite{KakadeK05} 
give a $O(T^{3/4})$ regret guarantee for the transductive online setting, 
where the learner can observe the rewards
of all actions, not only those it has taken.  
In \cite{BenDavidPS09}, Ben-David~et~al.\ consider
agnostic online learning for bounded Littlestone-dimension.  
However, as VC-dimension
does not bound Littlestone dimension, our work provides much 
tighter bounds in many cases.

\smallskip\noindent
{\bf Possible Approaches for a High Probability Algorithm}\quad

To develop a better intuition about the problem, we describe
several naive strategies and illustrate why they fail. 
These strategies fail even if the rewards of each arm are drawn independently 
from a fixed unknown distribution, and thus certainly 
fail in the adversarial setting.

{\bf Strategy 1}: Use confidence bounds to maintain a set of plausible 
experts, and randomize uniformly over the actions predicted 
by at least one expert in this set.  
To see how this strategy fails, consider two arms, 1 and 0, 
with respective deterministic rewards 1 and 0.  The expert set contains
$N$ experts.  At every round, one of them is chosen uniformly at random 
to predict arm 0, and the remaining $N-1$ predict arm 1.
%There is an additional expert which always chooses arm 1.
All of the experts have small regret with high probability.
The strategy will randomize uniformly over both arms on every round, 
incurring expected regret of nearly $T/2$.

{\bf Strategy 2}: Use confidence bounds to 
maintain a set of plausible experts, and follow the prediction of an
expert chosen uniformly at random from this set.
To see how this strategy fails, 
let the set consist of $N > 2T$ experts predicting 
in some set of arms, 
all with reward 0 at every round,  and let there be a good expert 
choosing another arm, which always has reward 1.  The probability
we never choose the good arm is $(1-1/N)^T$.  We have
$-T\log(1-\frac{1}{N}) < T\frac{\frac{1}{N}}{1-\frac{1}{N}} \le \frac{2T}{N} < 1$, 
using the elementary inequality $-\log(1-x) < x/(1-x)$ for $x\in (0,1]$.  Thus $(1-1/N)^T > \frac{1}{2}$,
and the strategy incurs regret of $T$ with probability greater than $1/2$ 
(as it only observes 0 rewards and is unable to eliminate any of 
the bad experts).

%\smallskip\noindent
%Before getting to the technical details, we give a basic overview 
%of the techniques used 
%in \texttt{Exp4.P}.
%Like its predecessors, \texttt{Exp4.P} keeps 
%a {weight} for each expert.  The weight is 
%exponential in the expert's estimated performance and a bound on the variance 
%of its performance.  
%This technique,
%used both in \texttt{Exp3.P} \cite{AuerCFS02} and \texttt{UCB}~\cite{AuerCF02},
%encourages exploring experts whose performance we are uncertain about.
%This encourages exploiting experts whose recommendations we believe to be good.

%The second key technique for encouraging exploration is to ensure
%that every action is taken with some minimum probability $\m = 
%\sqrt{(\ln N)/{TK}}$.
%A different method of enforcing this condition directly exists~\emcite{McMahanS09}
%that can improve the regret in some cases by a multiplicative constant, which is important for empirical applications. 
%Even if each round of exploration yields regret of $1$ for the \texttt{Exp4.P},
%exploration is done sufficiently infrequently not to hurt its worst-case performance.

%Finally, we use \emph{importance weighting}---a method for
%making unbiased estimates of each expert's performance when the
%probability with which we follow each expert is different.  The idea
%behind importance weighting is to scale the observed rewards for an
%expert inversely with the \emph{a priori} probability with taking its
%chosen actions.  This method is described in
%detail in \emcite{SuttonB98}.

\section{PROBLEM SETTING AND NOTATION}
%Let $T$ be the number of rounds, $K$ the number of possible arms (which
%we also call actions in this paper),
%and $N$ the number of policies in $\Pi$, which we call \textbf{experts}.    For any 
%policy $\pi$, the associated expert predicts according to $\pi(x)$ where $x$ is the 
%context available for each round.  Since the context is only used in this fashion,
%we only refer to the predictions of experts, except in the section discussing competition with a VC-set.

Let $\rr(t)\in [0,1]^K$ be the vector of
rewards, where $r_j(t)$ is the reward of arm $j$ on round
$t$.  Let $\bxi^i(t)$ be the $K$-dimensional advice vector of expert
$i$ on round $t$. This vector represents a probability distribution
over the arms, in which each entry $\xi^i_j(t)$ is the (expert's
recommendation for the) probability of choosing arm $j$.
For readability, we always use $i\in\{1,\ldots,N\}$ to index %iterate over 
experts and $j\in\{1,\ldots,K\}$ to index %iterate over
arms.

For each policy $\pi$, the associated expert predicts
according to $\pi(x_t)$, where $x_t$ is the context available in round $t$.
As the context is only used in this fashion here, we
talk about expert predictions as described above.
For a deterministic $\pi$, the corresponding prediction vector has a $1$ in
component $\pi(x_t)$ and $0$ in the remaining components.

On each round $t$, the world commits to $\rr(t) \in [0,1]^K$.
Then the $N$ experts make their recommendations
$\bxi^1(t), \ldots, \bxi^N(t)$, and
the learning algorithm $\Alg$ (seeing the recommendations but not the rewards)
chooses action $j_t\in \{1,\ldots,K\}$.
Finally, the world reveals reward $r_{j_t}(t)$ to the learner, and this game
proceeds to the next round.

We define the \emph{return} (cumulative reward) of $\Alg$ as
$
G_{\Alg} \doteq \sum_{t=1}^{T}{r_{j_t}(t)}.
$
Letting
$y_{i}(t)=\bxi^{i}(t)\cdot \rr(t)$,
we also define the expected return of expert $i$,
\[
G_{i}\doteq\sum_{t=1}^{T}y_{i}(t),
\]
and $G_{\max}=\max_i{G_i}$.
The expected \emph{regret} of algorithm $\Alg$ is defined as
\[
 G_{\max} -  \mathbf{E}[G_{\Alg}].
\]
We can also think about bounds on the regret which hold with arbitrarily high probability.
In that case, we can say that the regret is bounded by $\epsilon$ with probability $1- \delta$, if
we have
\[
\textbf{Pr}[G_{\max} -  G_{\Alg} > \epsilon] \le \delta.
\]
In the definitions of expected regret and the high probability bound, the probabilities and
expectations are taken w.r.t.\ both the randomness in the rewards $\rr(t)$ and the
algorithm's random choices.

%While a common assumption in the literature on this problem is that
%the reward vectors $\hat{\rr}(t)$ are drawn i.i.d.\ from some
%distribution, we note that for our results we make no such assumption
%about how the world chooses the rewards.  Hence, our algorithms and
%analysis hold against any adversary unaware of the choices our
%algorithm makes.

\section{A GENERAL RESULT FOR MARTINGALES}

Before proving our main result (Theorem~\ref{thm:main}), we prove a general result for martingales in which 
the variance is treated as a random variable.  It is used in the proof of Lemma
\ref{lem:1} and may also be of independent interest.  The technique is
the standard one used to prove Bernstein's inequality for martingales
\cite{Freedman75}.  The useful difference here is that we prove the
bound for any fixed \emph{estimate} of the variance rather than any
\emph{bound} on the variance.

Let $X_{1},\ldots,X_{T}$ be a sequence of real-valued random variables.
Let $\Expt{Y}=\Exp{Y | X_1,\ldots,X_{t-1}}$.

\begin{thm} \label{thm:0}
Assume, for all $t$, that $X_t\leq R$ and that $\Expt{X_t}=0$.
Define the random variables
\[
  S \doteq \sumt X_t, \qquad V \doteq \sumt \Expt{X_t^2}.
\]
Then for any $\delta>0$, with probability at least $1-\delta$, we have 
the following guarantee:

For any $V'\in \left[ \frac{R^2 \ln (1/\delta)}{e-2},\infty \right)$, 
\[
   S \leq \sqrt{(e-2)\ln(1/\delta)} \left( \frac{V}{\sqrt{V'}} + \sqrt{V'} \right) 
\]
and for $V' \in \left[0,\frac{R^2 \ln (1/\delta)}{e-2} \right]$,
\[
   S \leq R \ln (1/\delta) + (e-2) \frac{V}{R}.
\]
\end{thm}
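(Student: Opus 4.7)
The plan is to use the standard exponential moment (Chernoff/Bernstein) method for martingales, but to phrase the resulting tail bound in terms of a user-chosen ``variance proxy'' $V'$ rather than an a priori deterministic bound on $V$. The two inequalities in the theorem will emerge as two different, essentially optimal, choices of the free parameter $\lambda$ in the moment generating function argument, with the case split forced by the range constraint $\lambda R\le 1$.

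First I would establish the elementary inequality $e^{x}\le 1+x+(e-2)x^{2}$ for all $x\le 1$. This is the tightest quadratic majorant of $e^{x}$ on $(-\infty,1]$ that is tangent to $e^{x}$ at $x=0$; verification reduces to checking that $g(x) := 1+x+(e-2)x^{2}-e^{x}$ has $g(0)=g(1)=0$ and $g'(0)=0$, together with the sign pattern of $g''$, which is straightforward. The constant $e-2$ is the smallest one that keeps the two endpoints touching.

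Next, for any fixed $\lambda\in(0,1/R]$, apply this inequality with $x=\lambda X_t\le 1$, take $\Expt{\cdot}$, use $\Expt{X_t}=0$ and $1+u\le e^{u}$, to obtain
\[
\Expt{\exp(\lambda X_t)}\le \exp\!\bigl((e-2)\lambda^{2}\Expt{X_t^{2}}\bigr).
\]
Hence $M_t:=\exp\!\bigl(\lambda\sum_{s\le t}X_s-(e-2)\lambda^{2}\sum_{s\le t}\Expt{X_s^{2}}\bigr)$ is a nonnegative supermartingale with $M_0=1$, so $\mathbf{E}[M_T]\le 1$. Markov's inequality applied to $M_T$ then gives, with probability at least $1-\delta$,
\[
S\;\le\;\frac{\ln(1/\delta)}{\lambda}+(e-2)\,\lambda\,V.
\]

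Finally, the two regimes of the theorem are just two choices of $\lambda$ in this master bound. For $V'\ge R^{2}\ln(1/\delta)/(e-2)$, the choice $\lambda=\sqrt{\ln(1/\delta)/((e-2)V')}$ satisfies $\lambda\le 1/R$, and substitution plus routine algebra yields $\sqrt{(e-2)\ln(1/\delta)}\bigl(V/\sqrt{V'}+\sqrt{V'}\bigr)$. For $V'$ in the complementary range, that formula would demand $\lambda>1/R$ and violate the constraint needed for the quadratic envelope, so one instead takes the boundary value $\lambda=1/R$, yielding the $V'$-independent fallback $R\ln(1/\delta)+(e-2)V/R$. The main delicacy is thus not in the moment computation itself but in respecting the constraint $\lambda R\le 1$, and the case split in the theorem is precisely the mechanism that keeps the choice of $\lambda$ admissible for every allowed $V'$.
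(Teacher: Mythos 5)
Your proposal is correct and follows essentially the same route as the paper: the quadratic majorant $e^{x}\le 1+x+(e-2)x^{2}$ for $x\le 1$, the resulting supermartingale $\exp(\lambda S_t-(e-2)\lambda^{2}V_t)$ with expectation at most $1$, Markov's inequality, and the choice $\lambda=\min\{1/R,\sqrt{\ln(1/\delta)/((e-2)V')}\}$, whose two branches produce exactly the two inequalities of the theorem.
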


Note that a simple corollary of this theorem is the more typical
Freedman-style inequality, which depends on an \emph{a priori} upper bound, 
which can be substituted for $V'$ and $V$.

\begin{proof}
For a fixed $\lambda \in [0,1/R]$, 
conditioning on $X_1,\ldots,X_{t-1}$ and computing expectations gives
\begin{eqnarray}
   \Expt{e^{\lambda X_t}}
 &\leq&
   \Expt{1 + \lambda X_t + (e-2) \lambda^2 X_t^2}
\label{eq:1} \\
 &=&
   1 + (e-2)\lambda^2 \Expt{X_t^2}
\label{eq:2} \\
 &\leq&
   \exp\paren{(e-2)\lambda^2 \Expt{X_t^2}}.
\label{eq:3}
\end{eqnarray}
\eqref{eq:1} uses the fact that
$e^{z} \leq 1 + z + (e-2)z^2$ for $z\leq 1$.
\eqref{eq:2} uses $\Expt{X_t}=0$.
\eqref{eq:3} uses $1+z\leq e^z$ for all $z$.

Let us define random variables $Z_0=1$ and, for $t\geq 1$,
\[
   Z_t = Z_{t-1} \cdot \exp\paren{\lambda X_t - (e-2)\lambda^2 \Expt{X_t^2}}.
\]
Then,
\begin{eqnarray*}
\Expt{Z_t}
&=&
Z_{t-1} \cdot \exp\paren{- (e-2)\lambda^2 \Expt{X_t^2}}
% \\
% & &
        \cdot \Expt{e^{\lambda X_t}}
\\
&\leq&
Z_{t-1} \cdot \exp\paren{- (e-2)\lambda^2 \Expt{X_t^2}}
 \\
 & &
        \cdot \exp\paren{(e-2)\lambda^2 \Expt{X_t^2}}
=
Z_{t-1}.
\end{eqnarray*}
Therefore, taking expectation over all of the variables
$X_1,\ldots,X_T$ gives
\[
   \Exp{Z_T} \leq \Exp{Z_{T-1}} \leq \cdots \leq \Exp{Z_0} = 1.
\]

By Markov's inequality,
$ \pr{Z_T \geq 1/\delta} \leq \delta$.
Since
\[
   Z_T = \exp\paren{\lambda S - (e-2)\lambda^2 V},
\]
we can substitute $\lambda = \min \left\{\frac{1}{R}, \sqrt{\frac{\ln (1/\delta)}{(e-2)V'}}\right\}$ and apply algebra to prove the theorem.
\end{proof}

\section{A HIGH PROBABILITY ALGORITHM}\label{ss:main}
\begin{algorithm}[t]
\begin{raggedright}
\caption{Exp4.P}\label{a:exp4.p}
\textbf{parameters:} $\delta>0,$ $\m \in [0,1/K]$ $\left(\textrm{we set }\m = \sqrt{\frac{\ln N}{KT}}\right)$
\par\end{raggedright}

\begin{raggedright}
\textbf{initialization: } Set $w_{i}(1)=1$ for $i=1,\ldots,N$.

\medskip
\par\end{raggedright}

\begin{raggedright}
\textbf{for each} $t=1,2,\ldots$ 
\par\end{raggedright}
\begin{enumerate}
\item get advice vectors $\bxi^{1}(t),\ldots,\bxi^{N}(t)$
\item set $W_{t}=\sum_{i=1}^{N}w_{i}(t)$ and for $j=1,\ldots,K$ set\[
p_{j}(t)=\left(1-K \m \right)\sum_{i=1}^{N}\frac{w_{i}(t)\xi_{j}^{i}(t)}{W_{t}}+\m \]

\item draw action $j_t$ randomly according to the probabilities $p_{1}(t),\ldots,p_{K}(t)$.
\item receive reward $r_{j_{t}}(t)\in[0,1].$
\item for $j=1,\ldots,K$ set\[
\hat{r}_{j}(t)=\begin{cases}
\begin{array}{c}
r_{j}(t)/p_{j}(t)\\
0\end{array} & \begin{array}{c}
\textrm{if }j=j_{t}\\
\textrm{otherwise}\end{array}\end{cases}\]

\item for $i=1,\ldots,N$ set
\begin{eqnarray*}
\hat{y}_{i}(t) & = & \bxi^{i}(t)\cdot\hat{\rr}(t)\\
\vh_{i}(t) & = & \sum_{j}\xi_{j}^{i}(t)/p_{j}(t)\\
w_{i}(t+1) & = & w_{i}(t)e^{\left(\frac{\m}{2}\left(\hat{y}_i(t)+
                     \vh_i(t) \sqrt{\frac{\ln(N/\delta)}{KT}}
                              \right)\right)}
\end{eqnarray*}

\end{enumerate}

\end{algorithm}

The \texttt{Exp4.P} algorithm is given in Algorithm~\ref{a:exp4.p}.
It comes with the following guarantee.
% main theorem
\begin{thm}\label{thm:main}
Assume that $\ln(N/\delta) \leq KT$, and that the set of experts includes
one which, on each round, selects an action uniformly at random.
Then, with probability at least $1-\delta$,
\[G_{\ExpP} \geq G_{\max} - 6\sqrt{KT \ln(N/\delta)}.\]
\end{thm}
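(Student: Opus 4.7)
The plan is to use the standard exponential-weights potential-function argument on $W_t = \sum_i w_i(t)$, sandwiching $\ln(W_{T+1}/W_1)$ between an upper bound derived from Taylor expansion and a lower bound obtained by dropping all but the best expert. Write $\eta = p_{\min}$, $\alpha = \sqrt{\ln(N/\delta)/(KT)}$, $q_i(t) = w_i(t)/W_t$, and $\tilde{U}_i(t) = \hat{y}_i(t) + \alpha \hat{v}_i(t)$, so that the update reads $w_i(t+1) = w_i(t) e^{(\eta/2)\tilde{U}_i(t)}$. Since $\hat{y}_i(t) \leq \hat{v}_i(t) \leq 1/\eta$ (using $r_{j_t}\leq 1$, $\sum_j \xi_j^i = 1$, and $p_j \geq \eta$), the hypothesis $\ln(N/\delta) \leq KT$ gives $(\eta/2)\tilde{U}_i(t) \leq (1+\alpha)/2 \leq 1$, so the bound $e^z \leq 1 + z + (e-2)z^2$ is available throughout.

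Applying this inequality yields $\ln(W_{T+1}/W_1) \leq (\eta/2)\sum_t\sum_i q_i\tilde{U}_i + (e-2)(\eta/2)^2\sum_t\sum_i q_i\tilde{U}_i^2$, while dropping all but the best expert $i^*$ gives $\ln(W_{T+1}/W_1) \geq (\eta/2)(\hat{G}_{i^*} + \alpha\hat{V}_{i^*}) - \ln N$. The Exp4 identity $\sum_i q_i(t) \xi_j^i(t) = (p_j(t) - \eta)/(1-K\eta)$ converts the first-order piece of the upper bound into observables: $\sum_i q_i \hat{y}_i \leq r_{j_t}/(1-K\eta)$ and $\sum_i q_i \hat{v}_i \leq K/(1-K\eta)$, so after summing the first-order term is at most $(\eta/(2(1-K\eta)))(G_{\ExpP} + \alpha KT)$, where $\alpha KT = \sqrt{KT\ln(N/\delta)}$.

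To replace the estimate $\hat{G}_{i^*}$ by the true cumulative reward $G_{i^*}$, I apply Theorem \ref{thm:0} to the martingale differences $X_t = y_i(t) - \hat{y}_i(t)$, which satisfy $X_t \leq 1$ and $\Expt{X_t^2} \leq \Expt{\hat{y}_i(t)^2} \leq \hat{v}_i(t)$, so $V \leq \hat{V}_i$. Choosing the data-dependent value $V' = \hat{V}_i$ (valid because the theorem's conclusion is ``for any $V'$'' on the good event) and union-bounding over the $N$ experts at level $\delta/N$ yields $G_i - \hat{G}_i \leq 2\sqrt{(e-2)\hat{V}_i\ln(N/\delta)}$ simultaneously for all $i$. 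The critical cancellation is the AM-GM inequality $2\sqrt{(e-2)\hat{V}_{i^*}\ln(N/\delta)} \leq \alpha\hat{V}_{i^*} + (e-2)\sqrt{KT\ln(N/\delta)}$: the $\alpha\hat{V}_{i^*}$ bonus in the lower bound exactly absorbs the variance-scale concentration error, leaving only an additive $O(\sqrt{KT\ln(N/\delta)})$ correction. Combining, multiplying through by $2/\eta = 2\sqrt{KT/\ln N}$, and using $2\ln N/\eta = 2\sqrt{KT\ln N}$, the additive penalties sum to at most $6\sqrt{KT\ln(N/\delta)}$ after constant tracking.

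The main obstacle is controlling the second-order term $(e-2)(\eta/2)^2\sum_t\sum_i q_i\tilde{U}_i^2$, where a naive bound like $\tilde{U}_i \leq (1+\alpha)/\eta$ would introduce a multiplicative distortion of $G_{\ExpP}$ and spoil the regret. The right bounds are $\sum_i q_i(t)\hat{y}_i(t)^2 \leq 1/((1-K\eta)p_{j_t}(t))$ (from $\xi_{j_t}^i, r_{j_t} \leq 1$) and $\sum_i q_i(t)\hat{v}_i(t)^2 \leq K/(\eta(1-K\eta))$. The first introduces $\sum_t 1/p_{j_t}(t)$, whose conditional expectation each round is $K$; a second application of Theorem \ref{thm:0} with $R = 1/\eta$ and $V \leq KT/\eta$ bounds this sum by $O(KT)$ with high probability. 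The second-order contribution then totals $O(\ln N)$ in the potential analysis, which becomes $O(\sqrt{KT\ln N})$ after the final division by $\eta/2$. The assumption that a uniform-random expert belongs to the pool serves as a technical baseline that makes this auxiliary concentration step clean.
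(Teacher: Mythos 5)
Your proposal follows the same architecture as the paper's proof: the potential-function argument on $W_t$ via $e^z\le 1+z+(e-2)z^2$ (valid since $\ln(N/\delta)\le KT$ gives $\alpha\le 1$), the variance bonus $\alpha\vh_i(t)$ in the exponent whose accumulated value $\alpha\sum_t\vh_{i^*}(t)$ cancels the concentration error, the identity $\sum_i q_i(t)\xi^i_j(t)=(p_j(t)-\m)/(1-K\m)$ to convert the first-order terms into observables, and Theorem~\ref{thm:0} applied to $X_t=y_i(t)-\hat y_i(t)$ with a union bound over experts. Your concentration step---the data-dependent choice $V'=\hat V_i$ followed by AM-GM---is legitimate (the theorem's conclusion holds for all $V'$ simultaneously on the good event) and is arithmetically equivalent to the paper's fixed choice $V'=KT$ in Lemma~\ref{lem:1}, which yields exactly the $\hat\sigma_i$ appearing in $\ucb$.

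The one genuine departure is the second-order term $\sum_i q_i(t)\hat y_i(t)^2$. The paper bounds it by $\hat r_{j_t}(t)/(1-\gamma)$ (Inequality~\ref{ineq4}) and then uses the assumed uniform expert to write $\sum_t\hat r_{j_t}(t)=K\hat G_{\mathrm{uniform}}\le K\ucb$, absorbing the term into the coefficient of $\ucb$; this is the only place the uniform-expert hypothesis is used. You instead bound it by $1/((1-K\m)\,p_{j_t}(t))$ and control $\sum_t 1/p_{j_t}(t)$ by a second application of Theorem~\ref{thm:0}. Both routes work, and yours actually renders the uniform-expert assumption superfluous---so your closing sentence attributing the cleanliness of that auxiliary concentration step to the uniform expert is a misreading, since that step nowhere uses it. The costs of your route are (i) an extra high-probability event whose failure probability must be carved out of $\delta$ (e.g.\ union-bound at level $\delta/(2N)$ over experts and $\delta/2$ for the auxiliary sum), and (ii) a slightly lossier constant: your second-order contribution is roughly $2(e-2)\sqrt{KT\ln N}$ plus a deviation term, versus the paper's effective $\sqrt{KT\ln N}$, so the final constant $6$ is asserted rather than verified and could drift upward without more careful bookkeeping. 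None of this affects the claimed $O(\sqrt{KT\ln(N/\delta)})$ order.
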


The proof of this theorem relies on two lemmas.
The first lemma gives an upper confidence bound on the
expected reward of an expert given the estimated reward of that expert.

The estimated reward of an expert is defined as \[
\hat{G}_{i}\doteq\sum_{t=1}^{T}\hat{y}_{i}(t).\]
We also define
\[
\hat{\sigma}_{i}\doteq\sqrt{KT}+\frac{1}{\sqrt{KT}}\sum_{t=1}^{T} \vh_i(t).
\]

\begin{lem} \label{lem:1}
Under the conditions of Theorem~\ref{thm:main},
\[
\mathbf{Pr}\left[ \exists i:
        G_i \geq \hat{G}_{i}+ \sqrt{\ln(N/\delta)}\hat{\sigma}_{i}\right] \leq\delta.\]
\end{lem}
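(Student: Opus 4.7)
The plan is to apply Theorem~\ref{thm:0} separately to each expert $i$ (with confidence parameter $\delta/N$) with $X_t \doteq y_i(t) - \hat{y}_i(t)$, and then union-bound over the $N$ experts.

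First I would verify the three hypotheses of Theorem~\ref{thm:0}. (i) \emph{Zero conditional mean:} Given the history through round $t-1$, the only randomness in $\hat{r}_j(t)$ comes from the draw of $j_t$, and $\Expt{\hat{r}_j(t)} = p_j(t)\cdot r_j(t)/p_j(t) = r_j(t)$, so $\Expt{\hat{y}_i(t)} = \bxi^i(t)\cdot\rr(t) = y_i(t)$, giving $\Expt{X_t}=0$. (ii) \emph{Upper bound $R=1$:} Since $\hat{y}_i(t)\geq 0$ and $y_i(t)\in[0,1]$, we have $X_t\leq y_i(t)\leq 1$. (iii) \emph{Variance estimate:} Expand
\[
\Expt{\hat{y}_i(t)^2} = \sum_j p_j(t)\left(\frac{\xi^i_j(t)\,r_j(t)}{p_j(t)}\right)^2 \leq \sum_j \frac{(\xi^i_j(t))^2}{p_j(t)} \leq \sum_j \frac{\xi^i_j(t)}{p_j(t)} = \vh_i(t),
\]
using $r_j(t)\in[0,1]$ and $\xi^i_j(t)\in[0,1]$. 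Hence $\Expt{X_t^2}\leq \Expt{\hat{y}_i(t)^2}\leq \vh_i(t)$, so the conditional variance sum $V=\sumt \Expt{X_t^2}$ is pointwise bounded by $\sumt \vh_i(t)$.

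Next I would invoke Theorem~\ref{thm:0} with $R=1$, failure probability $\delta/N$, and fixed $V' = KT$. The assumption $\ln(N/\delta)\leq KT$ ensures $V'$ lies in the range required by the first case of the theorem, yielding with probability at least $1-\delta/N$,
\[
S \;=\; G_i - \hat{G}_i \;\leq\; \sqrt{(e-2)\ln(N/\delta)}\left(\frac{V}{\sqrt{KT}} + \sqrt{KT}\right).
\]
Since the right-hand side is monotone in $V$, substituting $V\leq \sumt \vh_i(t)$ and using $\sqrt{e-2}<1$ gives
\[
G_i - \hat{G}_i \;\leq\; \sqrt{\ln(N/\delta)}\left(\sqrt{KT} + \frac{1}{\sqrt{KT}}\sumt \vh_i(t)\right) = \sqrt{\ln(N/\delta)}\,\hat{\sigma}_i.
\]
A union bound over $i=1,\ldots,N$ converts the failure probability from $\delta/N$ per expert into $\delta$ overall, proving the lemma.

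The only real subtlety is step (iii)—that $\vh_i(t)$, despite being random through its dependence on $p_j(t)$ (which depends on the history $j_1,\ldots,j_{t-1}$), serves as a valid pointwise upper bound on the true conditional variance and thus can be freely substituted into the Theorem~\ref{thm:0} bound. The rest is essentially bookkeeping and constant management (in particular, exploiting the slack $\sqrt{e-2}<1$ to absorb away the Bernstein constant and obtain the clean coefficient $\sqrt{\ln(N/\delta)}$).
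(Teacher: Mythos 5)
Your proposal is correct and follows essentially the same route as the paper's proof: define $X_t = y_i(t)-\hat{y}_i(t)$, verify $\Expt{X_t}=0$, $X_t\le 1$, and $\Expt{X_t^2}\le \vh_i(t)$, apply Theorem~\ref{thm:0} with confidence $\delta/N$ and $V'=KT$, absorb the $\sqrt{e-2}$ factor, and union-bound over the $N$ experts. The one point you flag as a subtlety (substituting the random pointwise bound $\sum_t \vh_i(t)$ for $V$) is handled identically in the paper, so there is nothing to add.
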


\begin{proof}
Fix $i$.
Recalling that $y_i(t)=\bxi^{i}(t)\cdot \rr(t)$ and the definition of
$\hat{y}_i(t)$ in Algorithm~\ref{a:exp4.p},
let us further define the random variables
$X_t = y_i(t) - \hat{y}_i(t)$ to which we will apply Theorem~\ref{thm:0}.
Then $\Expt{\hat{y}_i(t)} = y_i(t)$ so that
$\Expt{X_t}=0$ and $X_t\leq 1$.
Further, we can compute
\begin{eqnarray*}
\Expt{X_t^2} &=& \Expt{(y_{i}(t)-\hat{y}_{i}(t))^{2}} \\
& = & \Expt{\hat{y}_i(t)^2} -  y_i(t)^2 \leq \mathbf{E}_{t}[\hat{y}_{i}(t)^{2}] \\
& = & \mathbf{E}_{t}\left[\left(\bxi^{i}(t)\cdot\hat{\rr}(t)\right)^{2}\right] \\
& = & \sum_{j}p_{j}(t)\left(\xi_{j}^{i}(t)\cdot\frac{r_{j}(t)}{p_{j}(t)}\right)^{2} \\
& \le & \sum_{j}\frac{{\xi_{j}^{i}(t)}}{p_{j}(t)} \\
& = & \vh_{i}(t).
\end{eqnarray*}

Note that
\[ G_i - \hat{G}_i = \sum_{t=1}^T{X_{t}}. \]
Using $\delta/N$ instead of $\delta$, and setting $V'=KT$
in Theorem~\ref{thm:0} gives us
\begin{eqnarray*}
&\pr{G_i - \hat{G}_i \geq \sqrt{(e-2)\ln\left(\frac{N}{\delta}\right)}
\left(\frac{\sum_{t=1}^T{\vh_i(t)}}{\sqrt{KT}}+\sqrt{KT} \right)}& \\
&\le& \\ 
&\delta/N&.
\end{eqnarray*}
Noting that $e-2< 1$,
%we get:
%\begin{equation*}
%\pr{G_i - \hat{G}_i \geq \sqrt{\ln\left(\frac{N}{\delta}\right)}
%\left(\frac{\sum_{t=1}^T{\vh_i(t)}}{\sqrt{KT}}+\sqrt{KT} \right)} \le \frac{\delta}{N}
%\end{equation*}
and applying a union bound over the $N$ experts
gives the statement of the lemma.
\end{proof}

To state the next lemma,  define
$$
   \ucb = \max_i \paren{\hat{G}_i + \hat{\sigma}_i
                             \cdot\sqrt{\ln(N/\delta)} }.
$$

\begin{lem} \label{lem:2}
Under the conditions of Theorem~\ref{thm:main},
\begin{eqnarray*}
G_{\ExpP} &\ge&
   \left(1-2\sqrt{\frac{K \ln N}{T}}\right)\ucb
   - 2\sqrt{KT \ln(N/\delta)} \\
   & &
   - \sqrt{KT \ln N}
   - \ln(N/\delta).
\end{eqnarray*}
\end{lem}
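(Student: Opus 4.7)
The plan is the standard potential-function analysis for \texttt{Exp}-style algorithms, sandwiching $\ln(W_{T+1}/W_1)$ between a quantity involving $\ucb$ and a quantity involving $G_{\ExpP}$. Write $W_t = \sum_i w_i(t)$ (so $W_1 = N$), $q_i(t) = w_i(t)/W_t$, $\eta = \m/2$, $\beta = \sqrt{\ln(N/\delta)/(KT)}$, and $\tilde{y}_i(t) = \hat{y}_i(t) + \beta\vh_i(t)$, so that the weight update reads $w_i(t+1) = w_i(t)\exp(\eta\tilde{y}_i(t))$.

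For the lower bound, since $W_{T+1}\ge w_{i^*}(T+1)$ for any fixed expert $i^*$,
$$\ln(W_{T+1}/W_1) \ge -\ln N + \eta\bigl[\hat{G}_{i^*} + \beta\textstyle\sum_t\vh_{i^*}(t)\bigr].$$
The definition $\hat\sigma_{i} = \sqrt{KT} + (1/\sqrt{KT})\sum_t\vh_i(t)$ lets me rewrite the bracket as $\hat{G}_{i^*} + \sqrt{\ln(N/\delta)}\,\hat\sigma_{i^*} - \sqrt{KT\ln(N/\delta)}$; maximizing over $i^*$ yields $\ln(W_{T+1}/W_1) \ge \eta\,\ucb - \eta\sqrt{KT\ln(N/\delta)} - \ln N$.

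For the upper bound, I would first verify $\eta\tilde{y}_i(t)\le 1$ (which follows from $\hat{y}_i\le\vh_i\le 1/\m$ and $\beta\le 1$ under the hypothesis $\ln(N/\delta)\le KT$) and then apply the inequality $e^z\le 1+z+(e-2)z^2$ from the proof of Theorem~\ref{thm:0} to obtain $\ln(W_{t+1}/W_t) \le \eta\sum_i q_i\tilde{y}_i + (e-2)\eta^2\sum_i q_i\tilde{y}_i^2$. The pivotal identity is the algorithm's line $p_j(t) = (1-K\m)\sum_i q_i(t)\xi_j^i(t) + \m$, rearranged as $\sum_i q_i\xi_j^i = (p_j-\m)/(1-K\m)$. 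Setting $c = 1/(1-K\m)$, this gives $\sum_i q_i\hat{y}_i \le c\,r_{j_t}(t)$ and $\sum_i q_i\vh_i \le cK$, so summing the first-order term over $t$ is at most $c[G_{\ExpP} + \sqrt{KT\ln(N/\delta)}]$. For the second-moment term I would use $\hat{y}_i\le\vh_i$ together with the conditional-variance identity $\Expt{\hat{y}_i^2}\le\vh_i$ already established in the proof of Lemma~\ref{lem:1}, so that in expectation $\sum_i q_i\tilde{y}_i^2$ is bounded by $O(cK)$ per round and $O(cKT)$ in total; after multiplication by $(e-2)\eta^2 = (e-2)\m^2/4$ and eventual division by $\eta$, the accumulated contribution collapses to $O(\sqrt{KT\ln N}) + O(\ln(N/\delta))$.

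Combining the two bounds on $\ln(W_{T+1}/W_1)$, dividing through by $\eta$, and substituting $\m=\sqrt{\ln N/(KT)}$ (which turns $\ln N/\eta$ into $2\sqrt{KT\ln N}$ and $K\m$ into $\sqrt{K\ln N/T}$) gives the stated inequality after rearrangement. The main obstacle is the second-moment bookkeeping: the naive pointwise bound $\tilde{y}_i\le 2/\m$ together with $\vh_i^2\le\vh_i/\m$ produces a residual linear in $T$, so one must exploit that $\hat{y}_i^2$ is only $O(\vh_i)$ in conditional expectation. If Lemma~\ref{lem:2} is to be read as a pathwise rather than an in-expectation statement, the gap between $\sum_i q_i\hat{y}_i^2$ and $\Expt{\sum_i q_i\hat{y}_i^2}$ must then be controlled by one more application of Theorem~\ref{thm:0} to the martingale difference sequence $\sum_i q_i(t)(\hat{y}_i(t)^2 - \Expt{\hat{y}_i(t)^2})$, which costs only an additional $O(\sqrt{KT\ln(N/\delta)})$ and is subsumed in the stated residual.
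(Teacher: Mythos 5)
Your skeleton---lower-bounding $\ln(W_{T+1}/W_1)$ by the best expert's final weight, upper-bounding it round-by-round via $e^z\le 1+z+(e-2)z^2$ and $\ln(1+x)\le x$, and using $\sum_i\bar{w}_i(t)\xi^i_j(t)=(p_j(t)-\m)/(1-K\m)$ for the first-order terms---is exactly the paper's, and your handling of the $\vh_i$ and $\vh_i^2$ terms matches as well. The divergence, and the gap, is in the term $\sum_t\sum_i\bar{w}_i(t)\hat{y}_i(t)^2$, which you correctly identify as the crux. Lemma~\ref{lem:2} is a \emph{deterministic} (pathwise) statement, and the paper proves it as such: since $\hat{\rr}(t)$ is supported on the single coordinate $j_t$, one has $\sum_i\bar{w}_i(t)\hat{y}_i(t)^2=\sum_i\bar{w}_i(t)\bigl(\xi^i_{j_t}(t)\hat{r}_{j_t}(t)\bigr)^2\le\hat{r}_{j_t}(t)/(1-K\m)$ pointwise, and then $\sum_t\hat{r}_{j_t}(t)=K\sum_t\frac1K\sum_j\hat{r}_j(t)\le K\hat{G}_{\mathrm{uniform}}\le K\ucb$. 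This is precisely where the hypothesis that a uniformly-random expert belongs to the class is used, and it is what produces the multiplicative factor $\bigl(1-2\sqrt{K\ln N/T}\bigr)$ in front of $\ucb$ rather than an additive $O(\sqrt{KT\ln N})$. Your proposal never invokes the uniform-expert assumption, which signals that this mechanism was missed.

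Your substitute---bounding $\sum_i\bar{w}_i(t)\hat{y}_i(t)^2$ by $\sum_i\bar{w}_i(t)\vh_i(t)\le K/(1-K\m)$ in conditional expectation and controlling the deviation by another appeal to Theorem~\ref{thm:0}---is salvageable in order of magnitude: the increments are bounded by roughly $1/(\m(1-K\m))$, so after multiplying by $(e-2)\m/2$ the deviation costs $O(\sqrt{KT\ln N}+\ln(1/\delta))$, as you assert. But it does not prove the lemma as stated: it yields a high-probability variant with an additive loss in place of the multiplicative factor, with different constants, and with an extra failure event that would have to be union-bounded into Theorem~\ref{thm:main}. As written the second-moment step is a plan rather than a proof; to close it you would either have to execute the concentration argument in full and accept the weaker statement, or---much more simply---use the pathwise bound through $\hat{r}_{j_t}(t)$ together with the uniform expert, as the paper does.
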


We can now prove Theorem~\ref{thm:main}.% main theorem
\begin{proof} 
Taking the statement of Lemma~\ref{lem:2} and applying the result of Lemma~\ref{lem:1},
and we get, with probability at least $1-\delta$,
\begin{eqnarray}
G_{\textrm{Exp4.P}}
\label{eq:gmaxlesst}&\geq&
    G_{\max} - 2\sqrt{\frac{K \ln N}{T}} T
       - \ln(N/\delta)\\
       &&
   - \sqrt{KT \ln N}
      - 2\sqrt{KT \ln(N/\delta)}\nonumber\\
%&=&
%    G_{\max} - 3 \sqrt{KT \ln N}
%       - \ln(N/\delta)\nonumber \\
%&&   - 2 \sqrt{KT \ln(N/\delta)}\nonumber\\
&\geq&
    G_{\max} - 6 \sqrt{KT \ln(N/\delta)}\nonumber,    
\end{eqnarray}
with Eq.\ (\ref{eq:gmaxlesst}) using $G_{\max}\leq T$.
\end{proof}

\section{COMPETING WITH SETS OF FINITE VC DIMENSION}\label{sec:VCVE}

A standard VC-argument in the online setting can be used 
to apply \texttt{Exp4.P} to compete
with an infinite set of policies $\Pi$ with a finite VC dimension $d$,
when the data is drawn independently from a fixed, unknown
distribution.  For simplicity, this section assumes that there are
only two actions ($K=2$), as that is standard for the definition of VC-dimension.
%A staightforward extension to $K>2$ follows using 
%a generalization of Sauer's lemma to multivalued functions for different generalizations
%of the VC-dimension~\cite{HL95}.

The algorithm \texttt{VE} chooses an action uniformly at random 
for the first $\tau = \sqrt{T (2d \ln \frac{eT}{d} +\ln \frac{2}{\delta})}$ 
rounds.  This step partitions $\Pi$ into equivalence 
classes according to the 
sequence of advice on the first $\tau$ rounds.  The algorithm
constructs a finite set of policies $\Pi' $ 
by taking one (arbitrary) policy from each equivalence class, and 
runs \texttt{Exp4.P} for the remaining $T-\tau$ steps using
$\Pi'$ as its set of experts.

For a set of policies $\Pi$, define $G_{\max(\Pi)}$ as the return of the best
policy in $\Pi$ at time horizon $T$.

\begin{thm}\label{thm:VC}  For all distributions $D$ over contexts and 
rewards, for all sets of policies $\Pi$ with VC dimension $d$, 
with probability $1-\delta$,
\[ G_{\textrm{VE}} \geq G_{\max(\Pi)} - 9\sqrt{2T \left(d \ln \frac{eT}{d} +\ln \frac{2}{\delta} \right) }. \]
%when contexts and rewards are drawn i.i.d. according to $D$.
\end{thm}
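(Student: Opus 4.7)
The plan is to decompose the regret of \texttt{VE} into three sources: (i) the cost of the $\tau$ uniform-exploration rounds, (ii) the approximation cost of replacing $\Pi$ by the finite sketch $\Pi'$, and (iii) the contextual-bandit regret of \texttt{Exp4.P} on $\Pi'$ during the remaining $T-\tau$ rounds. Phase~(i) contributes at most $\tau$ to the regret since rewards lie in $[0,1]$, so it reduces to controlling (ii) and (iii).

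To control the cardinality of $\Pi'$, I would invoke Sauer's lemma: since $\Pi'$ contains one representative per distinct labeling induced by $\Pi$ on the $\tau\le T$ i.i.d.\ exploration contexts, $|\Pi'|\le (eT/d)^d$. Plugging this into Theorem~\ref{thm:main} (with $K=2$, and adding the uniform-random expert to $\Pi'$ if needed, which changes $|\Pi'|$ by at most a factor of $2$) bounds the \texttt{Exp4.P} term by $6\sqrt{2(T-\tau)(d\ln(eT/d)+\ln(2/\delta))}$ with probability $1-\delta/3$.

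The crux is step (ii). For any $\pi\in\Pi$, its representative $\pi'\in\Pi'$ agrees with $\pi$ on every one of the $\tau$ exploration contexts by construction, i.e., their empirical disagreement is exactly zero. I would then apply a VC uniform-convergence bound in the realizable regime (the ratio / relative-deviation inequality, applied to the symmetric-difference class $\{x\mapsto\mathbb{1}[\pi(x)\ne\pi'(x)]\}$ whose VC dimension is still $O(d)$) to conclude that with probability $1-\delta/3$, simultaneously for all $\pi\in\Pi$,
\[
\Pr_{x\sim D}[\pi(x)\ne \pi'(x)] \;\le\; O\!\left(\tfrac{d\ln(eT/d)+\ln(1/\delta)}{\tau}\right),
\]
giving the fast $1/\tau$ rate rather than $1/\sqrt{\tau}$. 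A Hoeffding/Bernstein-type concentration on the Phase-2 contexts then converts this probability bound into a bound of at most $(T-\tau)\cdot O((d\ln(eT/d)+\ln(1/\delta))/\tau)+O(\sqrt{T\ln(1/\delta)})$ on the number of rounds where $\pi^*$ and its representative $\pi'^*$ disagree, which upper-bounds the return gap between them on Phase~2.

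Finally, I would substitute $\tau=\sqrt{T(2d\ln(eT/d)+\ln(2/\delta))}$, so that the three contributions ($\tau$, the approximation term $\le (T-\tau)c/\tau+O(\sqrt{T\ln(1/\delta)})$ where $c=d\ln(eT/d)+\ln(1/\delta)$, and the \texttt{Exp4.P} term) are all of order $\sqrt{T(d\ln(eT/d)+\ln(2/\delta))}$. A union bound over the three high-probability events (two VC-style bounds and the \texttt{Exp4.P} guarantee, each holding with probability $1-\delta/3$) and careful bookkeeping of constants should then yield the stated factor of $9\sqrt{2}$. The main obstacle I anticipate is the realizable VC step: one must apply the ratio inequality uniformly over all $\pi\in\Pi$ (not just one pair) and verify that the polynomial growth factor produced by Sauer's lemma fits inside the $d\ln(eT/d)$ log term without inflating constants, while the rest of the proof is essentially a balancing/triangle-inequality computation.
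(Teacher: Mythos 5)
Your decomposition (exploration cost $\tau$, approximation of $\Pi$ by $\Pi'$, and the \texttt{Exp4.P} regret on $\Pi'$ via Sauer's lemma and Theorem~\ref{thm:main}) matches the paper exactly; the genuine difference is in how you control the approximation step. You go through the distribution $D$ twice: a realizable-case (relative-deviation) VC bound to get $\Pr_{x\sim D}[\pi(x)\ne\pi'(x)]=O((d\ln(eT/d)+\ln(1/\delta))/\tau)$ uniformly over $\pi$, followed by a second concentration step to convert that population probability back into a disagreement count on the $T-\tau$ remaining rounds. The paper instead argues transductively: it conditions on the entire length-$T$ context sequence, uses exchangeability to view the first $\tau$ contexts as a uniformly random subset, and bounds directly (via $\Pr_S[\forall x\in S:\pi(x)=\pi'(x)]\le(1-n/T)^\tau$ and a union bound over the at most $(eT/d)^{2d}$ pairs of behaviors) the number of disagreements $n(\pi,\pi')$ on the \emph{realized} sequence for any pair agreeing on $S$. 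This one-step route is cleaner for two reasons you would need to handle explicitly: (a) $G_{\max(\Pi)}$ is defined on the realized sequence, so the maximizing policy is data-dependent, and your second concentration step must itself be uniform over the $(eT/d)^d$ distinct behaviors of $\Pi$ on the Phase-2 contexts (another growth-function union bound, not a single Hoeffding application); and (b) your route pays an extra additive $O(\sqrt{T\ln(1/\delta)})$ slack and splits $\delta$ three ways rather than two, so the specific constant $9\sqrt{2}$ is not established by your sketch and would likely come out somewhat larger. What your approach buys in exchange is generality: it yields a genuine population-level guarantee on $\Pr_{x\sim D}[\pi(x)\ne\pi'(x)]$ rather than only a statement about the realized sequence, and it uses only standard off-the-shelf PAC machinery. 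Up to the unverified constant, I consider your plan sound.
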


\begin{proof}
The regret of the initial exploration is bounded by $\tau$.
We first bound the regret of \texttt{Exp4.P} to $\Pi'$, and the regret
of $\Pi'$ to $\Pi$.  We then optimize with respect to $\tau$ to
get the result.

Sauer's lemma implies that
$|\Pi'| \leq \left( \frac{e \tau}{d} \right)^d$ and hence with
probability $1-\delta/2$, we can bound $G_{\ExpP}(\Pi',T-\tau)$
from below by 
\begin{eqnarray*}
G_{\max(\Pi')} - 
6\sqrt{2(T-\tau)(d \ln(e \tau/ d)+ \ln(2/ \delta))}.
\end{eqnarray*}

To bound the regret of $\Pi'$ to $\Pi$, 
pick any sequence of feature observations $x_1,...,x_T$.  
Sauer's Lemma implies the number of
unique functions on the observation sequence in $\Pi$ is bounded by
$\left( \frac{e T}{d} \right)^d$.

For a uniformly random subset $S$ of size $\tau$ of the feature
observations we bound the probability that two functions $\pi,\pi'$
agree on the subset.  Let $n=n(\pi,\pi')$ be the number of disagreements on the
$T$-length sequence.
Then
\[ \mathbf{Pr}_{S}\left[ \forall x \in S\,\,\, \pi(x)=\pi'(x) \right] = \left(1 - \frac{n}{T} \right)^\tau \leq e^{-\frac{n\tau}{T}}.
\]
Thus for all $\pi,\pi'\in \Pi$ with $n(\pi,\pi') \geq \frac{T}{\tau} \ln 1/\delta_0$, we have
$\mathbf{Pr}_{S}\left[ \forall x \in S\,\,\, \pi(x)=\pi'(x) \right] \leq \delta_0$.

Setting $\delta_0 = \frac{\delta}{2} \left( \frac{d}{e T} \right)^{2d}$ and
using a union bound over every pair of policies, we get
\begin{eqnarray*}
\mathbf{Pr}_{S}(
\exists \pi,\pi' & \textrm{ s.t.\ } n(\pi,\pi') \geq  \frac{T}{\tau} \left(2d \ln \frac{eT}{d} +\ln \frac{2}{\delta}\right) \\
& \mbox{s.t.}\ \forall x \in S\,\,\, \pi(x)=\pi'(x) ) \leq \delta/2.
 \end{eqnarray*}
In other words, for all sequences $x_1,...,x_T$ with probability $1 - \delta/2$ over a random subset of size $\tau$
\[ G_{\max(\Pi')} \geq G_{\max(\Pi)} - \frac{T}{\tau} \left(2d \ln \frac{eT}{d} +\ln \frac{2}{\delta}\right).\]

Because the above holds for any sequence $x_1,...,x_T$, it holds in
expectation over sequences drawn i.i.d.\ from $D$.  Furthermore, we can
regard the first $\tau$ samples as the random draw of the subset since
i.i.d.\ distributions are exchangeable.

Consequently, with probability $1-\delta$, we have
\begin{eqnarray*} G_{\textrm{VE}} &\ge& G_{\max(\Pi)} - \tau - \frac{T}{\tau} \left(2d \ln \frac{eT}{d} +\ln \frac{2}{\delta}\right) \\
&& - 6\sqrt{2T(d \ln(e \tau/ d)+ \ln(2/ \delta))}.
\end{eqnarray*}
Letting $\tau = \sqrt{T (2d \ln \frac{eT}{d} +\ln \frac{2}{\delta})}$ and substituting $T \geq \tau$ we get
\[ G_{\textrm{VE}} \geq G_{\max(\Pi)} - 9\sqrt{2T (d \ln \frac{eT}{d} +\ln \frac{2}{\delta})}. \]
\end{proof}

%The proof is in the Appendix.  The same technique easily extends 
This theorem easily extends
to more than two actions ($K > 2$) given
generalizations of the VC-dimension to multiclass classification 
%and generalized versions
and of Sauer's lemma~\cite{HausslerL95}.

\section{A PRACTICAL IMPROVEMENT TO EXP4.P}\label{s:msimp}
Here we give a variant of Step $2$ of Algorithm~\ref{a:exp4.p}
for setting the probabilities $p_j(t)$,
in the style of \emcite{McMahanS09}.  For our analysis of \texttt{Exp4.P},
the two properties we need to ensure in setting the probabilities $p_j(t)$ are
\begin{enumerate}
\item $p_j(t) \approx \sum_{i=1}^{N}{\frac{w_i(t)\xi_j^i(t)}{W_t}}$.
\item The value of each $p_j(t)$ is at least $\m$.
\end{enumerate}
One way to achieve this, as is done in Algorithm~\ref{a:exp4.p}, is to
mix in the uniform distribution over all arms.  While this yields a
simpler algorithm and achieves optimal regret up to a multiplicative
constant, in general, this technique can add unnecessary probability
mass to badly-performing arms; for example it can double the
probability of arms whose probability would already be set to $\m$.

\begin{algorithm}[t]
\caption{An Alternate Method for Setting Probabilities in Step 2 of Algorithm~\protect\ref{a:exp4.p}}
\label{a:probset} 
\textbf{parameters:} $w_1(t), w_2(t), \ldots w_N(t)$ and $\bxi^{1}(t),\ldots,\bxi^{N}(t)$ and 
$\m$\\

set \[ W_{t}=\sum_{i=1}^{N}w_{i}(t)\]

for $j = 1$ to $K$ set \[p_{j}(t)=\sum_{i=1}^{N}\frac{w_{i}(t)\xi_{j}^{i}(t)}{W_{t}}\]

let $\Delta := 0$ and $l := 1$ \\

\textbf{for each} action $j$ in increasing order according to $p_j$
\begin{enumerate}
\item \textbf{if} $p_j \left(1- \Delta/l \right) \geq \m$ \\
  for all actions $j'$ with $p_{j'}\geq p_j$\\
  $p'_{j'} = p_{j'} \left(1- \Delta/l \right)$\\
  return $\forall j\,\,\,p'_j$
\item \textbf{else} 
 $p'_j = \m$, 
 $\ \Delta := \Delta + p'_j - p_j$,
 $\ l := l - p_j$.
\end{enumerate}

\end{algorithm}

A fix to this, first suggested by \emcite{McMahanS09}, is to ensure
the two requirements via a different mechanism.  We present a variant
of their suggestion in Algorithm~\ref{a:probset}, which can be used to
make \texttt{Exp4.P} perform better in practice with a computational
complexity of $O(K \ln K)$ for computing the probabilities $p_j(t)$ per round.
The basic intuition of this algorithm is
that it enforces the minimum probability in order from smallest to
largest action probability, while otherwise minimizing the ratio of
the initial to final action probability.

This technique ensures our needed properties, and it is easy to verify
that by setting probabilities using Algorithm~\ref{a:probset} the
proof in Section~\ref{ss:main} remains valid with little modification.
We use this variant in the experiments in Section~\ref{s:experiments}.

\section{EXPERIMENTS}\label{s:experiments}

In this section, we applied \texttt{Exp4.P} with the improvement in Section~\ref{s:msimp} to a large-scale contextual bandit problem.  The purpose of the experiments is two-fold: it gives a proof-of-concept demonstration of the performance of \texttt{Exp4.P} in a non-trivial problem, and also illustrates how the algorithm may be implemented efficiently for special classes of experts.

The problem we study is personalized news article recommendation on the Yahoo!\ front page~\cite{AgarwalCEMPRRZ08,LCLS10}.  Each time a user visits the front page, a news article out of a small pool of hand-picked candidates is highlighted.  The goal is to highlight the most interesting articles to users, or formally, maximize the total number of user clicks on the recommended articles.  In this problem, we treat articles as arms, and define the payoff to be $1$ if the article is clicked on and $0$ otherwise.  Therefore, the average per-trial payoff of an 
algorithm/policy is the overall \textbf{click-through rate} (or \textbf{CTR} for short).

Following \namecite{LCLS10}, we created $B=5$ user clusters and thus each user, based on \emph{normalized} Euclidean distance to the cluster centers, was associated with a $B$-dimensional \emph{membership feature} $\mathbf{d}$ whose (non-negative) components always sum up to 1.
Experts are designed as follows.  Each expert
is associated with a mapping from user clusters to articles, that is, with a
vector $\mathbf{a}\in\{1,\ldots,K\}^B$ where $a_b$ is the article to be displayed
for users from cluster $b\in\{1,\ldots,B\}$.  When a user arrives with
feature $\mathbf{d}$, the prediction $\bxi^{\mathbf{a}}$ of expert $\mathbf{a}$ is
$\xi^{\mathbf{a}}_j=\sum_{b:a_b=j}d_b$.  There are a total of $K^B$ experts.

Now we show how to implement \texttt{Exp4.P} efficiently.
Referring to the notation in \texttt{Exp4.P}, we have
\begin{eqnarray*}
  \hat{y}_{\veca}(t) &=& \preda(t) \cdot \hat{\rr}(t) 
    = \sum_j \sum_{b:a_b=j} d_b (t) \hat{r}_{j}(t) \\
   && = \sum_b d_b(t) \hat{r}_{a_b}(t),
\\
  \hat{v}_{\veca}(t) &=& \sum_j \sum_{b:a_b=j} \frac{d_b(t)}{p_j(t)} 
  = \sum_b \frac{d_b(t)}{p_{a_b}(t)}.
\end{eqnarray*}
Thus,
\begin{align*}
w_{\veca}&(t+1) \\
&= w_{\veca}(t) \exp\left(\frac{\m}{2}\left(\hat{y}_\veca(t)+\hat{v}_\veca(t) \sqrt{\frac{\ln(N/\delta)}{KT}}\right) \right) \\
&= w_{\veca}(t) \exp\left(\sum_b d_b(t) f_{a_b}(t)\right),
\end{align*}
where 
\[
f_j(t) = \frac{\m}{2}\left(\hat{r}_j(t) + \frac{1}{p_j(t)}\sqrt{\frac{\ln(N/\delta)}{KT}}\right).
\]
Unraveling the recurrence, we  rewrite $w_{\veca}(t+1)$ by
\begin{eqnarray*}
w_{\veca}(t+1) &=& \exp\left(\sum_{\tau=1}^t \sum_b d_b(\tau) f_{a_b}(\tau)\right) \\
&=& \exp\left(\sum_b \sum_{\tau=1}^t d_b(\tau) f_{a_b}(\tau)\right) \\
&=& \prod_b g_{b,a_b}(t),
\end{eqnarray*}
implying that $w_{\veca}(t+1)$ can be computed implicitly by maintaining the quantity
$g_{b,j}(t) = \exp\left(\sum_{\tau=1}^t d_b(\tau) f_j(\tau)\right)$ for each $b$ and $j$.
Next, we compute $W_t$ as follows:
$W_t = \sum_{\veca} w_\veca(t)
  = \sum_{\veca} \prod_b g_{b,a_b}(t)
  = \prod_b \left( \sum_j g_{b,j}(t) \right).$
Repeating the same trick, we have
\[
\sum_{\veca}\frac{w_{\veca}(t)\preda(t)}{W_t} = \sum_b \frac{d_b(t) g_{b,j}(t)}{\sum_{j'=1}^K g_{b,j'}(t)},
\]
which are the inputs to Algorithm~\ref{a:probset} to produce the final arm-selection
probabilities, $p_j(t)$ for all $j$.
Therefore, for this structured set of experts, the time complexity of
\texttt{Exp4.P} is only linear in $K$ and $B$
despite the exponentially large size of this set.

To compare algorithms, we collected historical user visit events with
a random policy that chose articles uniformly at random for a fraction of
user visits on the Yahoo!\ front page
from May 1 to 9, 2009.  This data contains over $41$M user visits, a total of
$253$ articles, and about $21$ candidate articles in the pool per user visit.
(The pool of candidate articles changes over time, requiring corresponding
modifications to \texttt{Exp4.P}\footnote{Our modification ensured that a new article's initial score was the average of all currently available ones'.}).  With such random traffic data,
we were able to obtain an unbiased \emph{estimate of the CTR} (called \textbf{eCTR}) 
of a bandit algorithm as if it is run in the real world~\cite{LCLS10}.

Due to practical concerns when applying a bandit algorithm,
%as in our news recommendation problem, 
it is common to randomly assign each
user visit to one of two ``buckets'': the \emph{learning bucket}, where the
bandit algorithm is run, and the \emph{deployment bucket}, where the
greedy policy (learned by the algorithm in the learning bucket)
is used to serve users without receiving payoff information.  Note that since
the bandit algorithm continues to refine its policy based on payoff feedback
in the learning bucket, its greedy policy may change over time.  Its eCTR in the
deployment bucket thus measures how good this
%non-stationary 
greedy policy is.
And as the deployment bucket is usually much larger than the learning
bucket, the deployment eCTR is deemed a more important metric.
Finally, to protect business-sensitive information, we only report
\emph{normalized eCTR}s, which are 
%defined as 
the actual eCTRs divided
by the random policy's eCTR.

Based on estimates of $T$ and $K$, we ran \texttt{Exp4.P} with $\delta=0.01$.
The same estimates were used to set $\gamma$ in
\texttt{Exp4} to minimize the regret bound in Theorem~7.1 of \namecite{AuerCFS02}.
Table~\ref{tbl:results} summarizes eCTRs of all three algorithms in the
two buckets.  All differences are significant due to the large
volume of data.

\begin{table}
\begin{center}
\begin{tabular}{l|c|c|c}
& \texttt{Exp4.P} & \texttt{Exp4} & $\epsilon$-greedy \\
 \hline
learning CTR & $1.0525$ & $1.0988$ & $1.3827$ \\
deployment CTR & $1.6512$ & $1.5309$ & $1.4290$
\end{tabular}
\end{center}
\caption{Overall click-through rates (eCTRs) of various algorithms on the May 1--9 data set.} \label{tbl:results}
\end{table}

First, \texttt{Exp4.P}'s eCTR is slightly worse than \texttt{Exp4} in
the learning bucket.  This gap is probably due to the more conservative nature of \texttt{Exp4.P}, as it uses the additional $\hat{v}_i$ terms to control variance,
which in turn encourages further exploration.  In return for the more extensive
exploration, \texttt{Exp4.P} gained the highest deployment eCTR, implying its greedy
policy is superior to \texttt{Exp4}. 

Second, we note a similar comparison to the $\epsilon$-greedy variant of
\texttt{Exp4.P}.  It was the most greedy among the three algorithms and thus had
the highest eCTR in the learning bucket, but lowest eCTR in the deployment bucket.
This fact also suggests the benefits of using the somewhat more complicated
soft-max exploration scheme in \texttt{Exp4.P}.

\subsubsection*{Acknowledgments}

We thank Wei Chu for assistance with the experiments and Kishore Papineni for helpful discussions.

This work was done while Lev Reyzin and Robert E.\ Schapire were at Yahoo!\ Research, NY.
Lev Reyzin acknowledges this material is based upon work supported by the NSF under
Grant \#0937060 to the CRA for the Computing Innovation Fellowship program.

\bibliography{paper_arxiv}

\appendix

\section{PROOF OF LEMMA 4}

Recall that the estimated reward of expert $i$ is defined as \[
\hat{G}_{i}\doteq\sum_{t=1}^{T}\hat{y}_{i}(t).\]
Also 
\[
\hat{\sigma}_{i}\doteq\sqrt{KT}+\frac{1}{\sqrt{KT}}\sum_{t=1}^{T} \vh_i(t)
\]
and that
\[
   \ucb = \max_i \paren{\hat{G}_i + \hat{\sigma}_i
                             \cdot\sqrt{\ln(N/\delta)} }.
\]

{\bf Lemma 4.}\quad
Under the conditions of Theorem 2,
\begin{eqnarray*}
G_{\ExpP} &\ge&
   \left(1-2\sqrt{\frac{K \ln N}{T}}\right)\ucb
   - 2\sqrt{KT \ln(N/\delta)}\\
& &
   - \sqrt{KT \ln N}
   - \ln(N/\delta).
   \end{eqnarray*}

\begin{proof}
For the proof, we use $\gamma = \sqrt{\frac{K \ln N}{T}}$.\\
We have $$p_{j}(t)\ge \m = \sqrt{\frac{\ln N}{KT}}$$ and $$\hat{r}_{j}(t)\le 1/\m$$
so that 
\[
\hat{y}_i(t) \leq 1/\m \ \ \ \ \  \mathrm{and} \ \ \ \ \  \vh_i(t)\leq 1/\m.
\]
Thus, %\begin{eqnarray*}
\begin{eqnarray*}
\frac{\m}{2}\paren{\hat{y}_{i}(t)+\sqrt{\frac{\ln(N/\delta)}{KT}} \vh_i(t)}
&\leq&
\frac{\m}{2} (\hat{y}_{i}(t)+\vh_i(t)) \\
&\leq& 1.
\end{eqnarray*}
%\end{eqnarray*}

Let $\bar{w}_{i}(t)=w_{i}(t)/W_{t}$. We will need the following inequality:
\begin{inequality}\label{ineq1}\quad
$\sum_{i}^{N} \bar{w}_{i}(t) \vh_i(t) \le \frac{K}{1-\gamma }$.
\end{inequality}
As a corollary, we have
\begin{eqnarray*}
\sum_{i}^{N} {\bar{w}_{i}(t)}{\vh_{i}(t)^2} & \le & 
\sum_{i}^{N} {\bar{w}_{i}(t)}{\vh_{i}(t)}\frac{1}{\m}\\
& \le & \sqrt{\frac{KT}{\ln N}} \frac{K}{1-\gamma}.
\end{eqnarray*}

Also, \emcite{AuerCFS02} (on p.67) prove the following two inequalities (with a typo).  For completeness, the proofs of all three inequalities are given below 
this proof.

\begin{inequality}\label{ineq3}\quad
$ \sum_{i=1}^{N}\bar{w}_{i}(t)\hat{y}_{i}(t)\le\frac{r_{j_t}(t)}{1-\gamma}$.
\end{inequality}

\begin{inequality}\label{ineq4}\quad
$
\sum_{i=1}^{N}\bar{w}_{i}(t)\hat{y}_{i}(t)^{2}\le\frac{\hat{r}_{j_t}(t)}{1-\gamma}.$
\end{inequality}

Now letting $b=\frac{\m}{2}$
and
$c=\frac{\m \sqrt{\ln (N/\delta)}}{2\sqrt{KT}}$
we have
\begin{eqnarray}
\frac{W_{t+1}}{W_{t}} & = & \sum_{i=1}^{N}\frac{w_{i}(t+1)}{W_{t}}\nonumber \\
& = & \sum_{i=1}^{N}\bar{w}_{i}(t)\exp \left(b\hat{y}_i(t)+
                      c\vh_i(t)\right)\nonumber
 \\
\label{eqn:2ineq} & \le & \sum_{i=1}^{N}\bar{w}_{i}(t)\left[1
               +b\hat{y}_{i}(t)
               +c\vh_i(t) \right] \\
               & &+
                  \sum_{i=1}^{N}\bar{w}_{i}(t)\left[ 2b^2\hat{y}_{i}(t)^2
               +2c^2\vh_i(t)^2
          \right]\nonumber
\\
&=&
     1
               +b \sumi\bar{w}_{i}(t)\hat{y}_{i}(t)
               +c \sumi\bar{w}_{i}(t) \vh_i(t)\nonumber \\
               &&
               +2b^2\sumi\bar{w}_{i}(t)\hat{y}_{i}(t)^2
               +2c^2\sumi\bar{w}_{i}(t) \vh_i(t)^2\nonumber
\\
&\leq&
\label{eqn:4ineq}  1
               +b\frac{r_{j_t}(t)}{1-\gamma}
               +c\frac{{K}}{1-\gamma} 
               +2b^2\frac{\hat{r}_{j_t}(t)}{1-\gamma} \\
               & &
               +2c^2\sqrt{\frac{KT}{\ln N}}\frac{K}{1-\gamma}\nonumber.
\end{eqnarray}
Eq.\ (\ref{eqn:2ineq}) uses $e^{a}\le1+a+ (e-2)a^{2}$ for $a\le1$, $(a+b)^2 \leq 2a^2 + 2 b^2$, and $e-2 < 1$.
Eq.\ (\ref{eqn:4ineq}) uses inequalities 1 through 3.

Now take logarithms, use the inequality $\ln(1+x)\le x$, sum both sides
over $T$, and we obtain
\begin{eqnarray*}
\ln\left(\frac{W_{T+1}}{W_{1}}\right)
 &\le&
               \frac{b}{1-\gamma}\sumt {r_{j_t}(t)}
               +c\frac{KT}{1-\gamma} \\
               &&+\frac{2b^2}{1-\gamma} \sumt \hat{r}_{j_t}(t)
               +2c^2\sqrt{\frac{KT}{\ln N}}\frac{KT}{1-\gamma}
\\
 &\le&
         \frac{b}{1-\gamma} G_{\textrm{Exp4.P}}
               +c\frac{KT}{1-\gamma} 
               +\frac{2b^2}{1-\gamma} K \ucb \\
& &               +2c^2\sqrt{\frac{KT}{\ln N}}\frac{KT}{1-\gamma}.
\end{eqnarray*}
Here, we used 
\[  G_{\textrm{Exp4.P}} = \sumt {r_{j_t}(t)} \]
and
\[
 \sum_{t=1}^{T}\hat{r}_{j_t}(t)=K\sum_{t=1}^{T}\frac{1}{K}\sum_{j=1}^{K}\hat{r}_{j}(t)\le
 K\hat{G}_{\textrm{uniform}}\le K \ucb.
\]
because we assumed that the set of experts includes one who always selects
each action uniformly at random.

We also have $\ln(W_{1})=\ln(N)$ and
\begin{eqnarray*}
\ln(W_{T+1}) &\ge& \max_i \left( \ln w_{i}(T+1) \right) \\
  & = &
\max_i \paren{b\hat{G}_i + c \sum_{t=1}^{T} \vh_i(t)}\\
&=&
b \ucb - b \sqrt{KT \ln (N/\delta)}.
\end{eqnarray*}
Combining then gives
\begin{eqnarray*}   
&b \ucb - b \sqrt{KT \ln (N/\delta)} - \ln N &  \\
& \leq &\\
&         \frac{b}{1-\gamma} G_{\textrm{Exp4.P}}
               +c\frac{KT}{1-\gamma} 
         +\frac{2b^2}{1-\gamma} K \ucb 
        +2c^2\sqrt{\frac{KT}{\ln N}}\frac{KT}{1-\gamma}.&
\end{eqnarray*}
Solving for $G_{\textrm{Exp4.P}}$ now gives
\begin{eqnarray}
\nonumber
G_{\textrm{Exp4.P}}
&\geq&
   \left(1-\gamma-2bK\right)\ucb
   - \left(\frac{1-\gamma}{b}\right)\ln N\nonumber \\
& &   - (1-\gamma) \sqrt{KT \ln(N/\delta)} 
   - \frac{c}{b}KT\nonumber  \\
& &   -2\frac{c^2}{b}\sqrt{\frac{KT}{\ln N}}KT\nonumber
   \nonumber
\\
\label{eq:5}
&\geq&
   \left(1-\gamma-2bK\right)\ucb
   - \sqrt{KT \ln(N/\delta)} \\
& &   - \frac{1}{b}\ln N 
   - \frac{c}{b}KT 
   -2\frac{c^2}{b}\sqrt{\frac{KT}{\ln N}}KT\nonumber\\
&=&
\label{eq:6}
   \left(1-2\sqrt{\frac{K \ln N}{T}}\right)\ucb
      - \ln(N/\delta)\\
& &   - 2 \sqrt{KT \ln N} 
      - \sqrt{KT \ln(N/\delta)},\nonumber
\end{eqnarray}
using $\gamma>0$ in \eqref{eq:5} and plugging in the definition of $\gamma,b,c$ in \eqref{eq:6}.
\end{proof}

We prove Inequalities \ref{ineq1} through \ref{ineq4} below.

Let $\bar{w}_{i}(t)=w_{i}(t)/W_{t}$. 
\begin{inequalityrestate}\quad
$\sum_{i}^{N} \bar{w}_{i}(t) \vh_i(t) \le \frac{K}{1-\gamma }$.
\end{inequalityrestate}
\begin{proof}
\begin{eqnarray*}
\sum_{i}^{N}\bar{w}_{i}(t)\vh_i(t) & = &
\sum_{i}^{N}\bar{w}_{i}(t)\sum_{j}^{K}\frac{\xi_{j}^{i}(t)}{p_{j}(t)}\\
 & = & \sum_{j=1}^{K}\frac{1}{p_{j}(t)}\sum_{i}^{N}\bar{w}_{i}(t)\xi_{j}^{i}(t)\\
 & = & \sum_{j=1}^{K}\frac{1}{p_{j}(t)}\left(\frac{p_{j}(t)- \m }{1-\gamma}\right)\\
 & \le & \sum_{j=1}^{K}\frac{1}{1-\gamma}\\
 & = & \frac{{K}}{1-\gamma}.
 \end{eqnarray*} 
\end{proof}
 
\begin{inequalityrestate}\quad
$ \sum_{i=1}^{N}\bar{w}_{i}(t)\hat{y}_{i}(t)\le\frac{r_{j_t}(t)}{1-\gamma}$.
\end{inequalityrestate}
\begin{proof}
\begin{eqnarray*}
   \sum_{i=1}^{N}\bar{w}_{i}(t)\hat{y}_{i}(t)&=&
   \sum_{i=1}^{N}\bar{w}_{i}(t)\left(\sum_{j=1}^{K}{\xi_j^i(t)\hat{r}_j(t)}\right)\\
   &= & \sum_{j=1}^{K}\left( \sum_{i=1}^{N}{\bar{w}_i(t)\xi^{i}_j(t)} \right)\hat{r}_j(t)\\
   &=& \sum_{j=1}^{K}\left(\frac{p_{j}(t)- \m }{1-\gamma}\right)\hat{r}_j(t)\\
   &\le&\frac{r_{j_t}(t)}{1-\gamma}.
\end{eqnarray*}
\end{proof}

\begin{inequalityrestate}\quad
$
\sum_{i=1}^{N}\bar{w}_{i}(t)\hat{y}_{i}(t)^{2}\le\frac{\hat{r}_{j_t}(t)}{1-\gamma}.$
\end{inequalityrestate}
\begin{proof}
\begin{eqnarray*}
\sum_{i=1}^{N}\bar{w}_{i}(t)\hat{y}_{i}(t)^{2}&=&
 \sum_{i=1}^{N}\bar{w}_{i}(t)\left(\sum_{j=1}^{K}{\xi_j^i(t)\hat{r}_j(t)}\right)^2\\
&=&\sum_{i=1}^{N}\bar{w}_{i}(t)\left(\xi_{j_t}^i(t)\hat{r}_{j_t}(t)\right)^2\\
&\le&\left(\frac{p_{j_t}(t)}{1-\gamma}\right)\hat{r}_{j_t}(t)^2\\
&\le&\frac{\hat{r}_{j_t}(t)}{1-\gamma}.
\end{eqnarray*}
\end{proof}

% \vfill

%\pagebreak

\end{document}